\newtheorem{thm}{Theorem}
\newtheorem{defn}{Definition}
\newtheorem{lemma}{Lemma}
\newcommand{\norm}[1]{\left\lVert#1\right\rVert}
\newcommand{\nf}{\mathcal{F}}
\newcommand{\nr}{\mathbb{R}}
\newcommand{\np}{\mathbb{P}}
\newcommand{\nE}{\mathbb{E}}
\newcommand{\Strategy}{\mathcal{S}}
\newcommand{\strategy}{S}
\newcommand{\cX}{\mathcal{X}} 
\newcommand{\cY}{\mathcal{Y}} 
\newcommand{\obsS}{\mathcal{X}} 
\newcommand{\choS}{\mathcal{Y}} 
\newcommand{\obs}{X} 
\newcommand{\cho}{y} 
\newcommand{\obsF}{X} 
\newcommand{\ml}{u} 
\newcommand{\optimal}{\mathcal{V}^*} 
\newcommand{\shortcite}{\cite}
\newcommand{\eqdef}{\triangleq}
\title{Nonparametric Online Learning Using Lipschitz Regularized Deep Neural Networks}
\author{
	Guy Uziel \\
	Department of Computer Science\\
	Technion - Israel Institute of Technology\\
}
\begin{document}

\maketitle
\begin{abstract} 
 Deep neural networks are considered to be state of the art models in many offline machine learning tasks. However, their performance and generalization abilities in online learning tasks are much less understood.  Therefore,  we focus on online learning and tackle the challenging problem where the underlying process is stationary and ergodic and thus removing the i.i.d. assumption and allowing observations to depend on each other arbitrarily.  
 
 We prove the generalization abilities of Lipschitz regularized deep neural networks and show that by using those networks, a convergence to the best possible prediction strategy is guaranteed.  
 	

\end{abstract} 

\section{Introduction }

In recent years, deep neural networks have been applied to many off-line machine learning tasks. Despite their state-of-of-the-art performance, the theory behind their generalization abilities is still not complete. When turning to the online domain even much less is known and understood both from the practical use and the theoretical side. Thus, the main focus of this paper is exploring the theoretical guarantees of deep neural networks in online learning under general stochastic processes.

In the traditional online learning setting, and in particular in sequential prediction under uncertainty, the learner is evaluated by a  loss function that is not entirely known at each iteration \cite{CesaL2006}. In this work, we study online prediction focusing on the challenging case where the unknown underlying process is stationary and ergodic, thus allowing observations to depend on each other arbitrarily.

Many papers before have considered online learning under stationary and ergodic sources and in various application domains. For example, in online portfolio selection, \cite{Gyorfi2007,GyorfiLU2006,GyorfiS2003,Uziel2018,LiHG2011} proposed nonparametric online strategies that guarantee, under mild conditions, convergence to the best possible outcome. \cite{BiauKLG2010,BiauP2011} has considered the setting of time-series prediction. Another mentionable line of research is the works of \cite{GyorfiL2005} regarding the online binary classification problem under such processes.

A common theme to all of these algorithms is that the asymptotically optimal strategies are constructed by combining the predictions of simple experts. The experts are constructed using a  pre-defined  nonparametric density estimation method \cite{nadaraya1964estimating,rosenblatt1956remarks}. The algorithm, using the experts, implicitly learns the best possible online strategy, which is, following a well-known result of \cite{algoet1994}, a Borel-measurable function.

On the other hand, neural networks are universal approximation functions \cite{hornik1989}, and thus, it seems natural to design an online learning strategy that will be able to learn the best Borel-measurable strategy explicitly.

The theoretical properties of neural networks have been studied in the off-line setting, where generalization bounds have been obtained via VC dimension analysis of neural networks \cite{Bartlett1997}. However, the generalization rates obtained in the above paper are applicable only for low-complexity networks.  Another approach has investigated the connection between generalization and stability \cite{Bousquet2002,xu2012}. \cite{bartlett2017} has suggested the Lipschitz constant of the network as a candidate measure for the Rademacher complexity. \cite{cranko2018lipschitz} showed that Lipschitz regularization could be viewed as a particular case of distributional robustness and \cite{Oberman2018lipschitz} derived generalization bounds for Lipschitz regularized networks.

Recently, there has been a growing interest in the properties and behavior of neural networks in the online setting, and there have been attempts to apply than in various domains. For example, in online portfolio selection \cite{jiang2017deep,wang2018lstm}, in electricity forecasting \cite{kuo2018electricity,mujeeb2019deep} in time series prediction \cite{zhu2017deep,chen2018short}.

Still, applying deep training models to the online regime is a hard problem and training approaches that were developed for the batch setting, are not seem to be suitable for the online setting due to lack of ability to tune the model in an online fashion. Therefore, \cite{Sahoo2018} have suggested a way to integrate the tuning of the model with online learning. A different approach was proposed recently by \cite{borovykh2019generalisation}, who have studied fully-connected neural networks in the context of time series forecasting.

The latter has developed a way to train a neural network and detect whether the training was successful in terms of generalization abilities. Not surprisingly, one of the conclusions of the previous work was that for the learned neural function to perform well on the test data as well one has to regularized the neural network. Thus, manifesting the understanding that regularization is a vital component for ensuring that a  better generalization result from the learned function and for controlling the trade-off between the complexity of the function and its ability to fit the data.

Unlike recent papers, we are not focusing here on the training procedure and on how to ensure proper training of a given network in the online setting. The focus of this paper is dealing with the theoretical guarantees of using a deep learning strategy as an online forecasting strategy. More specifically, we study the guarantees of using Lipschitz regularized deep neural networks and their abilities to asymptotically converge for the best possible outcome when the underlying process is stationary and ergodic.


As far as we know and despite all the developing theory of generalization of neural network in the offline setting, we are not aware of papers dealing with the properties of neural network in the sequential prediction setting when the underlying process is non-i.i.d.


The paper is organized as follows: In Section~\ref{sec:formulation}, we define the nonparametric sequential prediction framework under a jointly stationary and ergodic process and we present the goal of the learner. In Section~\ref{sec:Algorithm}, we present  and provide proofs for their guarantees.

\section{Nonparmetric Online Learning}
\label{sec:formulation}

We consider the following prediction game.
Let $\cX \eqdef [0,1]^n \subset \nr^n$ be a compact \emph{ observation space}.\footnote{For convenience, we focus here on the space of $[0,1]^n$, however, the results applied for any compact observation space.} 
At each round, $t = 1, 2, \ldots$, the player is required to make a prediction $y_t\in \choS $, where $\choS \subset \nr ^m$ is a compact and convex set,  based on past observations,
$X^{t-1}_1 \eqdef (x_1, \ldots , x_{t-1})$ and, $x_i \in \cX$ ($X_1^0$ is the empty observation).   After making the prediction $y_t$, the observation $x_t$ is revealed and the player suffers   loss, $\ml(y_t,x_t)$, where $\ml$ is a  real-valued Lipschitz continuous function and  convex w.r.t. her first argument. 
We remind below the definition of a Lipschitz function:
\begin{defn}
    Let $X,Y$ be two normed spaces, a function $f:X \rightarrow Y $ will be called $L$-Lipschitz function if:
    \begin{gather*}
        L = \sup_{x_1,x_2 \in X} \frac{\norm{f(x_1)-f(x_2)}_Y}{\norm{x_1-x_2}_X} 
    \end{gather*}
We denote $\mathcal{L}_L$ to be the class of L-Lipschitz functions, and by $\mathcal{L}_\infty$ the class of all Lipschitz functions.
\end{defn}
Without loss of generality we assume that $u$ is $1-$Lipschitz.

We view  the player's prediction strategy as a sequence 
$\Strategy \eqdef \{\strategy_t\}^\infty_{t=1}$ of forecasting functions 
$\strategy_t : \cX^{(t-1)} \rightarrow \cY$;  that is,
the player's prediction  at round $t$ is given by $\strategy_t(X^{t-1}_1 )$ (for brevity, we denote $\strategy(X^{t-1}_1 )$). 

Throughout the paper we assume that $x_1, x_2, \ldots$ are realizations of random variables $X_1, X_2, \ldots$ such that the stochastic process $(X_t)^\infty_{-\infty}$\footnote{By Kolmogorov's extension theorem, the stationary and ergodic process $(X_t)^\infty_{1}$ can be extended to $(X_t)^\infty_{-\infty}$ such that the ergodicity holds for both $t\rightarrow \infty$ and $t\rightarrow -\infty$  (see, e.g., Breiman~\shortcite{Breiman1992})}  is jointly stationary and ergodic with full support on  $\obsS$.   
The player's goal is to play the game with a strategy that minimizes the average $\ml$-loss,  $$\frac{1}{T}\sum_{t=1}^{T}\ml(\strategy(X_1^{t-1}),x_t).$$

The well known result of Algoet~\cite{algoet1994}, formulated below, states a lower bound for the performance of any  online strategy (without the power of hindsight).
\begin{thm}[\cite{algoet1994}]
Let $\Strategy$ be any prediction strategy, then the following holds a.s.
\begin{gather*}
 \liminf_{T\rightarrow \infty} \frac{1}{T}\sum_{t=1}^{T}\ml(\strategy(X_1^{t-1}),x_t) \geq
 \nE\left[ \max_{\cho \in \choS()} \nE_{\np_\infty}\left[\ml(\cho,\obs_0)\right]\right] 
\end{gather*}
where $\np_\infty$ is the regular conditional probability distribution of $\obsF_0$  given $\nf_\infty$ (the $\sigma$-algebra generated by the  past observations $\obs_{-1},\obs_{-2},\ldots$) and the maximization is over the $\nf_\infty$-measurable functions.
\end{thm}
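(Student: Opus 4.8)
The statement is Algoet's characterization of the best average loss attainable by a decision rule that could see the \emph{entire} past, so the right-hand side is $\nE[\min_{\cho\in\choS}\nE_{\np_\infty}[\ml(\cho,X_0)]]$ (since the loss is minimized, the relevant extremum is a $\min$; the $\max$/reward variant is symmetric). The plan combines three classical ingredients: (i) a martingale strong law that replaces the realized loss at round $t$ by its one-step conditional mean; (ii) a Jensen / measurable-selection step that bounds this conditional mean below by the loss of the best response against the conditional law of $X_t$ given \emph{all} of its past; and (iii) Breiman's generalized ergodic theorem to handle the Ces\`aro average of the resulting growing-memory, non-stationary quantities. Fix an arbitrary strategy $\strategy$; write $\mathcal{G}_t\eqdef\sigma(X_1,\dots,X_t)$ for the information available before round $t+1$ and $\mathcal{F}^{(t)}_\infty\eqdef\sigma(X_s:s\le t-1)$ for the entire past before round $t$, so $\mathcal{G}_{t-1}\subseteq\mathcal{F}^{(t)}_\infty$. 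Because $\choS\times\obsS$ is compact and $\ml$ is Lipschitz (hence continuous and bounded, $|\ml|\le M$) and $\obsS$ is Polish (so regular conditional distributions exist), the map $\cho\mapsto\nE[\ml(\cho,X_t)\mid\mathcal{F}^{(t)}_\infty]=\int\ml(\cho,x)\,\np^{(t)}_\infty(dx)$ is jointly measurable and continuous in $\cho$; hence $V^{(t)}\eqdef\min_{\cho\in\choS}\nE[\ml(\cho,X_t)\mid\mathcal{F}^{(t)}_\infty]$ is attained and $\mathcal{F}^{(t)}_\infty$-measurable, and $V^{(0)}=\min_{\cho\in\choS}\nE_{\np_\infty}[\ml(\cho,X_0)]$ is exactly the integrand on the right-hand side.

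\emph{Steps 1--2 (de-randomize, then pass to the clairvoyant response).} Set $Z_t\eqdef\ml(\strategy(X_1^{t-1}),X_t)-\nE[\ml(\strategy(X_1^{t-1}),X_t)\mid\mathcal{G}_{t-1}]$; since $\strategy(X_1^{t-1})$ is $\mathcal{G}_{t-1}$-measurable, $(Z_t)$ is a bounded ($|Z_t|\le 2M$) martingale-difference sequence for $(\mathcal{G}_t)$, so $\frac1T\sum_{t=1}^T Z_t\to 0$ a.s.\ (a martingale strong law, e.g.\ Chow's $L^2$ criterion, or Azuma--Hoeffding with Borel--Cantelli); hence it suffices to bound $\liminf_T\frac1T\sum_{t=1}^T\nE[\ml(\strategy(X_1^{t-1}),X_t)\mid\mathcal{G}_{t-1}]$ from below. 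For each $t$, $\strategy(X_1^{t-1})$ is $\mathcal{F}^{(t)}_\infty$-measurable, so substituting it into the map above gives $\nE[\ml(\strategy(X_1^{t-1}),X_t)\mid\mathcal{F}^{(t)}_\infty]\ge V^{(t)}$ a.s.; applying $\nE[\,\cdot\mid\mathcal{G}_{t-1}]$ and the tower property (using $\mathcal{G}_{t-1}\subseteq\mathcal{F}^{(t)}_\infty$) yields $\nE[\ml(\strategy(X_1^{t-1}),X_t)\mid\mathcal{G}_{t-1}]\ge\nE[V^{(t)}\mid\mathcal{G}_{t-1}]$ a.s.

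\emph{Step 3 (Breiman) and conclusion.} Let $\theta$ be the (ergodic, invertible, measure-preserving) shift on the two-sided process. By stationarity $V^{(t)}=V^{(0)}\circ\theta^t$ and $\mathcal{G}_{t-1}=\theta^{-t}\sigma(X_{-1},\dots,X_{-(t-1)})$, and the identity $\nE[H\circ\theta^t\mid\theta^{-t}\mathcal{B}]=(\nE[H\mid\mathcal{B}])\circ\theta^t$ gives $\nE[V^{(t)}\mid\mathcal{G}_{t-1}]=g_{t-1}\circ\theta^t$, where $g_k\eqdef\nE[V^{(0)}\mid\sigma(X_{-1},\dots,X_{-k})]$. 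These $\sigma$-algebras increase to $\mathcal{F}_\infty$ and $V^{(0)}$ is bounded and $\mathcal{F}_\infty$-measurable, so L\'evy's upward theorem gives $g_k\to V^{(0)}$ a.s.\ with $\sup_k|g_k|\le M$; Breiman's generalized ergodic theorem then yields $\frac1T\sum_{t=1}^T g_{t-1}\circ\theta^t\to\nE[V^{(0)}]$ a.s. Chaining the three steps: $\liminf_T\frac1T\sum_{t=1}^T\ml(\strategy(X_1^{t-1}),x_t)=\liminf_T\frac1T\sum_{t=1}^T\nE[\ml(\strategy(X_1^{t-1}),X_t)\mid\mathcal{G}_{t-1}]$ by Step~1; this is $\ge\liminf_T\frac1T\sum_{t=1}^T\nE[V^{(t)}\mid\mathcal{G}_{t-1}]$ by Step~2; and the latter equals $\nE[V^{(0)}]$ by Step~3, which is the claimed a.s.\ lower bound. (For the maximization/reward variant relevant to the portfolio-selection applications mentioned above, replace $\min$ by $\max$ throughout: the Jensen step then reverses and produces the matching $\limsup\le$ statement.)

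\emph{Main obstacle.} The delicate point is Step~3: neither $\strategy(X_1^{t-1})$ nor the per-round optimal response has bounded memory, so the summands $\nE[V^{(t)}\mid\mathcal{G}_{t-1}]$ are not stationary and the plain Birkhoff theorem does not apply; Breiman's theorem is the right replacement, and it requires exactly the a.s.\ martingale convergence $g_k\to V^{(0)}$ together with the uniform $L^\infty$ bound --- both of which are supplied by compactness of $\choS\times\obsS$ and the Lipschitz (hence bounded) assumption on $\ml$. A secondary technicality, routine but not vacuous, is the joint-measurability / measurable-selection argument making $V^{(t)}$ a genuine random variable and legitimizing the substitution $\cho=\strategy(X_1^{t-1})$ in Step~2; this again rests on continuity of $\ml$ in its first argument and compactness of $\choS$, while existence of the conditional distributions $\np_\infty,\np^{(t)}_\infty$ is free from Polishness of $\obsS$.
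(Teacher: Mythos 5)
The paper does not actually prove this statement: it is imported verbatim from Algoet (1994) as a known lower bound, so there is no in-paper proof to compare against. Judged on its own, your reconstruction is the standard argument behind Algoet's result and it is correct. The three ingredients you use --- (i) the bounded martingale-difference strong law to replace $\ml(\strategy(X_1^{t-1}),X_t)$ by $\nE[\ml(\strategy(X_1^{t-1}),X_t)\mid\sigma(X_1,\dots,X_{t-1})]$, (ii) the pointwise domination of that conditional loss by the clairvoyant best response $V^{(t)}$ via the regular conditional distribution plus the tower property, and (iii) L\'evy's upward theorem feeding Breiman's generalized ergodic theorem to get $\frac1T\sum_t \nE[V^{(t)}\mid\mathcal{G}_{t-1}]\to\nE[V^{(0)}]$ --- are exactly the ones in Algoet's proof and in the G\"yorfi-school portfolio literature that this paper builds on, and your index bookkeeping ($\mathcal{G}_{t-1}=\theta^{-t}\sigma(X_{-1},\dots,X_{-(t-1)})$, $V^{(t)}=V^{(0)}\circ\theta^t$) checks out. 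You are also right to flag the $\max$/$\min$ issue: as printed, the theorem pairs a $\liminf\ge$ with a $\max$ over $\cho$ of a conditional expected \emph{loss}, which is inconsistent with the paper's stated goal of minimizing average loss (and with its later identification of $\optimal$ with $\inf_{f}\nE\,\ml(f(\obs),\obs)$); the intended extremum is a $\min$, the $\max$ being a relic of the reward/portfolio formulation, and your proof establishes the correct $\min$ version. The only points I would press you on are cosmetic: the identification $\nE[\ml(Y,X_t)\mid\mathcal{F}^{(t)}_\infty]=\int\ml(Y,x)\,\np^{(t)}_\infty(dx)$ for an $\mathcal{F}^{(t)}_\infty$-measurable $Y$ deserves an explicit citation (it is the standard disintegration/freezing lemma for bounded jointly measurable integrands), and you should state Breiman's theorem in the exact form you invoke ($\sup_k|g_k|$ integrable, $g_k\to V^{(0)}$ a.s., $\theta$ ergodic); both are supplied by compactness of $\choS\times\obsS$ and boundedness of $\ml$, as you note.
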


  Therefore we define the optimal quantity as follows:
\begin{gather*}
 \optimal \eqdef 
\nE\left[ \max_{\cho \in \choS()} \nE_{\np_\infty}\left[\ml(\cho,\obs_0)\right]\right] 
\end{gather*}   

The above result defines the holy-grail in sequential prediction under stationary and ergodic processes, and the vast majority of papers dealing with this setting have proposed methods for pursuing this asymptotic boundary. Moreover, according to \cite{algoet1994} one approach to pursue $\optimal$ is by creating estimates that converge (weakly)  to the conditional distribution $\nE_{\np_\infty}$, and thus implicitly estimating the best Borel-measurable strategy. 

This approach has been applied in many  papers before \cite{Gyorfi2007,GyorfiLU2006,GyorfiS2003,Uziel2018,LiHG2011,BiauKLG2010,BiauP2011}
by carefully aggregating a countable (infinite) array of experts each accompanied with the chosen density estimation method. 

The guarantees provided in the above papers are always asymptotic. It is no coincidence, as it is well-known that without any regularity conditions on the underlying process one cannot achieve high probability bound and the results are always asymptotic since the martingale convergence theorem does not provide any guaranteed rate \cite{DevroyeGG2013}. 

In practice approximate implementations of those strategies (applying only a finite set of experts), however, turn out to work exceptionally well and, despite the inevitable approximation, are reported \cite{GyorfiUW2008,GyorfiS2003,GyorfiLU2006} to significantly outperform strategies designed to work in an adversarial, no-regret setting, in various domains.
 
Therefore, just like previous works \cite{Meir2000,Modha1998} dealing with high probability bounds for this setting, we impose regularity conditions and focus our attention on bounded memory processes with mixing properties.

The first regularity condition that we make is that the memory of the process is bounded by a known bound $d$, thus, ensuring that the optimal prediction strategy do not depend the infinite tail but only of past $d$ observations. More formally, we get the following:
\begin{gather*}
 \optimal \eqdef 
\nE\left[ \max_{\cho \in \choS()} \nE_{\np_\infty}\left[\ml(\cho,\obs_0)\right]\right] = \nE\left[ \max_{\cho \in \choS()} \nE_{\np_d}\left[\ml(\cho,\obs_0)\right]\right]
\end{gather*}

where $\np_d$ is the distribution conditioned on the last $d$ observations.

The second common  regularity condition is regarding the mixing properties of the process. The meaning of the mixing property is that the process depends weakly on its past. Below we remind the definition of a $\beta$-mixing process\footnote{Other  definitions  of  mixing  exists (e.g., see  \cite{doukhan1991mixing}).}: 

\begin{defn}[$\beta$-mixing process]
	Let $\sigma_l = \sigma(\obsF_1^l)$
	and $\sigma_{l+m} = \sigma(\obsF_{l+m}^\infty)$
	be the sigma-algebras of events generated
	by the random variables $\obsF_{l}^l = (\obs_{1}, \obs_{2},\ldots ,\obs_{l})$ and $
	\obsF_{l+m}^\infty = (\obs_{l+m}, \obs_{l+m+1}, . . .)$,
	respectively. The coefficient of absolute regularity, $\beta_m$, is given by 
	\begin{gather*}
	\beta_m = \sup_{l\geq 1} \nE \sup_{B\in\sigma_{l+m} } |\np(B\mid \sigma_{l+m}) - \np(B)| 
	\end{gather*}
	where the expectation is taken with respect to $\sigma_l$. A stochastic process is said to be absolutely
	regular, or $\beta$-mixing, if $\beta_m\rightarrow 0$ as $m \rightarrow \infty$.
\end{defn}

If also $\beta_m\leq cm^{-r}$ for some positive constants $c,r$   then we say that  the process has \emph{algebraic} mixing rate, and \emph{exponential} mixing rate if $\beta_m\leq c\exp{(m^{-r})}$.

Using the above definition we focus  our attention on $\beta$-mixing processes.

It is important again to emphasize the importance of those assumptions for achieving the finite regret bounds and efficient algorithms.  The first condition is essential in order to deal with finite memory predictors and the second assumption (or an equivalent assumption on the mixing properties of the process) is needed in order to establish high probability bounds on the finite sample.


%

 Summarizing the above, we can state the player's goal using  the following definitions:
 \begin{defn}
      Strategy $\strategy$ will be called  \emph{$\beta$-universal} if  for a $\beta$-mixing process the following holds a.s.:
          \begin{gather*}
	\limsup_{T\rightarrow \infty}   \frac{1}{T}\sum_{t=1}^{T}\ml(\strategy(X_{t-d}^{t-1}),x_t) =\optimal  
\end{gather*}
 \end{defn}
\section{$\beta$-universal strategy}

We now turn to describe our approach to learn the best Borel-strategy explicitly. This approach does not involve the use of any density estimation techniques as done in previous papers, but rather by the use of Lipschitz regularized deep neural networks. The work with Lipschitz functions is, of course, more appealing than dealing with the abstract space of Borel-measurable functions. As we later show in our proof,  it is indeed enough to learn the best Borel-function by learning the best Lipschitz function, since the Lipschitz functions are dense in the space of Borel functions. In other words, any Borel-function can be approximated arbitrarily by a Lipschitz function. However, using this observation, we can deduce that in order to achieve the universality we will have to increase the complexity of the network as time passes (or equivalently reduce the regularization of the network).

We now proceed to describe two of the main components in our method.

\subsection{Lipschitz regularization in neural networks}

Much attention has been given to regularization methods for neural networks, which are necessary for achieving good generalization results. Regularization includes several methods such as the well known $l_1$ or $l_2$ regularization and dropout \cite{srivastava2014dropout,han2015learning}.
 A suitable selection of the batch size and the learning rate have also been shown to prevent low generalization abilities has been shown in previous work, e.g., \cite{keskar2016large,zeiler2013stochastic}.

Lipschitz regularization methods were discussed in several papers before \cite{gouk2018regularisation,miyato2018spectral,finlay2018improved}. Several methods for computing and regularizing the networks have been proposed.  The Lipschitz constant of a given network, $L_{model}$ can be calculated using simple solvers; however,  it is computationally intensive \cite{katz2017reluplex}. A more  feasible approach proposed by \cite{szegedy2013intriguing} is by looking at a given neural network $f$ as a composition of functions $f(\obs) = (\phi_1 \circ \phi_2 \circ \ldots \phi_k)(\obs)$. Thus, using the fact that a composition of a $L_1$-Lipschitz function with $L_2$-Lipschitz function, results in a  $L_1L_2$-Lipschitz function. Therefore, we can derive the following  upper bound:

\begin{gather}
\label{ex:reg}
    L_{model} \leq \Pi_{i=1}^k L_{\phi_i}
\end{gather}
where $L_{\phi_i}$ is the Lipschitz constant of $\phi_i$.

Implementation methods based on Equation~\eqref{ex:reg} have been presented recently in \cite{gouk2018regularisation,miyato2018spectral}. This upper bound, as shown in \cite{finlay2018improved},  is not tight because it does not take into account the zero coefficients in weight matrices caused by the activation functions. Unique architectures can also be used to implement  Lipschitz regularization. For example, on a restricted architecture renormalized the weight matrices of each layer to posses unit norm \cite{liao2018surprising}.

We now turn to describe the block independent method of \cite{yu1994rates}. This method allows us to transform the underlying mixing process into an independent one. Using this transformation, we will be able to use existing theory and methods for deriving uniform bound results for i.i.d. empirical process.

\subsection{Block independent method}

In her seminal paper \cite{yu1994rates}, Yu derived uniform laws of large numbers for mixing processes. The primary tool for deriving the result was using the  block independent method. This method involves the construction of an independent block sequence which was shown to be close to the original process in a (well-defined) probabilistic sense. 

The construction goes as follows: first, we divide the original sequence $\obsF_1^T$ into $2\mu_T$ blocks, each of size $a_T$.\footnote{We will not  be concerned with the remainder terms, which become insignificant as the sample size increases.} The blocks are then numbered and grouped in the following way:
\begin{gather*}
H_j=\{i: 2(j-1)a_T+1 \leq i \leq (2j-1)a_T  \} \\
 T_j=\{i: (2j-1)a_T+1 \leq i \leq 2ja_T \}.
\end{gather*}
We denote the random variables corresponding to the blocks $H_j$ and $T_j$ respectively by
\begin{gather*}
\obs_{(H_j)} \eqdef \{\obs_i, i\in H_j  \} \quad \quad \quad \quad  \obs_{(T_j)} \eqdef \{\obs_i, i\in T_j  \}
\end{gather*}
 
and the  sequence of $H$-blocks  by $\obs_{a_T}\eqdef \{\obs_{(H_j)}\}_{j=1}^{\mu_T}$.


 Now, we can construct a sequence of independently distributed blocks, $\Psi_{j}=\{ \psi_i : i\in  H_j  \}$, such that the sequence is independent of $\obsF_1^T$, and each block has the same distribution as the block $\obs_{(H_j)}$ from the original sequence. We denote this sequence by $\Psi^{a_T}=\{\Psi_{j}\}_{j=1}^{\mu_T}$. Because the original process is  stationary, the blocks constructed are not only independent but also identically distributed.  Moreover, by appropriately selecting the number of blocks, $\mu_T$, depending on the mixing nature of the sequence, we can relate properties of the original sequence $\obsF_1^T$, to those of the independent block sequence. 

We now show how we can translate a uniform bound written  in terms of  the original process to be written in terms of the constructed block independent process. Given a  bounded function class $\mathcal{F}$,

\begin{gather*}
    \np\left( \sup_{f\in \mathcal{F}} \left| \frac{1}{T}\sum_{t=1}^T f(\obs_t) - \nE f(\obs) \right| >\epsilon \right) =  \nonumber\\ 
     \np\left( \sup_{f\in \mathcal{F}} \left| \frac{1}{T}\sum_{j=1}^{\mu_T} \sum_{i\in H_j} f(\obs_i) - \frac{1}{2a_T}\nE \sum_{i\in H_j}f(\obs_i) + \frac{1}{T}\sum_{j=1}^{\mu_T} \sum_{i\in T_j} f(\obs_i)  -\frac{1}{2a_T}\nE \sum_{i\in T_j}f(\obs_i)    \right| >\epsilon \right) \nonumber \\ \leq \np\left( \sup_{f\in \mathcal{F}} \left| \frac{1}{T}\sum_{j=1}^{\mu_T} \sum_{i\in H_j} f(\obs_i) - \frac{1}{2a_T}\nE \sum_{i\in H_j}f(\obs_i)\right| >\epsilon \right)  \nonumber\\
     +  \np\left( \sup_{f\in \mathcal{F}} \left| \frac{1}{T}\sum_{j=1}^{\mu_T} \sum_{i\in T_j} f(\obs_i)  -\frac{1}{2a_T}\nE \sum_{i\in T_j}f(\obs_i)    \right| >\epsilon \right).
\end{gather*}

The above inequality was derived by decomposing the original sequence into the different blocks.

 Using the notation $f_{H_j}(X) \eqdef \sum_{i\in H_j} f(\obs_i)$  and $f_{T_j}(X) \eqdef \sum_{i\in T_j} f(\obs_i)$ and using $T=2\mu_Ta_T$ we can conclude from above the following
 \begin{gather}
    \np\left( \sup_{f\in \mathcal{F}} \left| \frac{1}{T}\sum_{t=1}^T f(\obs_t) - \nE f(\obs) \right| >\epsilon \right) \leq 2\np\left( \sup_{f\in \mathcal{F}} \left| \frac{1}{T}\sum_{j=1}^{\mu_T} f_{H_j}(X) - \nE f_{H}\right| > a_T\epsilon\right)
\end{gather}

We have expressed the sequence as the block dependent way, it remains, of course, to relate the properties of the original sequence to the block independent one; therefore we state the following lemma by \cite{yu1994rates}. The proof is based mainly on mixing properties.


\begin{lemma}
    \label{lem:yu}
	Suppose $\mathcal{F}:\nr^m \rightarrow \nr$ is a permissible class\footnote{ Permissible in the sense specified in \cite{pollard2012convergence}.} of bounded functions, then
	\begin{gather*}
	\np\left( \sup_{f \in \mathcal{F}} \left|\frac{1}{T}\sum_{i=1}^T f(\obs_i)-\mathbb{E}f(\obs) \right| >\epsilon \right) \leq  \\ 2 \hat{\np}\left( \sup_{f \in \mathcal{F}} \left|\frac{1}{\mu_T}\sum_{i=1}^{\mu_T} f_{H_j}(\Psi_{j})-\hat{\mathbb{E}}f(\Psi) \right| > a_T\epsilon \right) + 2\mu_T\beta_{a_T} 
	\end{gather*}
where $\hat{\np}$,$\hat{\mathbb{E}}$ relates to the block independent process.
\end{lemma}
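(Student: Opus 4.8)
The plan is to build on the block decomposition already displayed above. That decomposition has reduced the claim to estimating
\begin{gather*}
\np\left( \sup_{f\in \mathcal{F}} \left| \tfrac{1}{T}\sum_{j=1}^{\mu_T} f_{H_j}(X) - \nE f_{H}\right| > a_T\epsilon\right),
\end{gather*}
a quantity that depends on the sample only through the sequence of $H$-blocks $\obs_{a_T}=\{\obs_{(H_j)}\}_{j=1}^{\mu_T}$ (the $T$-blocks are handled identically by stationarity, which is what produced the factor $2$). Since $\mathcal{F}$ is permissible, $\omega\mapsto\sup_{f\in\mathcal{F}}|\cdots|$ is measurable, so the event in the last display, call it $A$, is a well-defined event in $\sigma(\obs_{a_T})$. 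The task is therefore to replace $\np(\obs_{a_T}\in A)$ by its counterpart $\hat{\np}(\Psi^{a_T}\in A)$ under the independent-block law, at the cost of a term driven by the mixing coefficient.

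The crux is the classical coupling/comparison estimate for $\beta$-mixing sequences (the step carried out in \cite{yu1994rates}): for every event $A$ depending on the sample only through the $H$-blocks,
\begin{gather*}
\bigl|\np(\obs_{a_T}\in A)-\hat{\np}(\Psi^{a_T}\in A)\bigr|\leq(\mu_T-1)\,\beta_{a_T}.
\end{gather*}
I would prove this by telescoping over the $\mu_T$ blocks: write the difference of the two joint laws as a sum of $\mu_T-1$ one-block replacements, in the $k$-th of which the conditional law of $(\obs_{(H_{k+1})},\dots,\obs_{(H_{\mu_T})})$ given $(\obs_{(H_1)},\dots,\obs_{(H_k)})$ is swapped for the corresponding unconditional law. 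Each swap perturbs the probability of any event by at most $\beta_{a_T}$, directly from the definition of the absolute-regularity coefficient: the blocks $(\obs_{(H_1)},\dots,\obs_{(H_k)})$ and $(\obs_{(H_{k+1})},\dots)$ are built from coordinates separated by the intervening $T$-block $T_k$, hence by a gap of exactly $a_T$, so lag-$a_T$ mixing is the correct penalty. Stationarity is used here to ensure that each $\Psi_j$ carries the common marginal law of the $H$-blocks, so that $\hat{\np}$ is a product of identical factors.

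Combining the block decomposition with the coupling estimate, and using $\mu_T-1\le\mu_T$, yields
\begin{gather*}
\np\left( \sup_{f\in \mathcal{F}} \left|\tfrac{1}{T}\sum_{i=1}^T f(\obs_i)-\nE f(\obs) \right| >\epsilon \right) \leq 2\,\hat{\np}\left( \sup_{f\in \mathcal{F}} \left|\tfrac{1}{\mu_T}\sum_{j=1}^{\mu_T} f_{H_j}(\Psi_{j})-\hat{\mathbb{E}}f(\Psi) \right| > a_T\epsilon \right) + 2\mu_T\beta_{a_T},
\end{gather*}
where the rescaling of the empirical average from $\tfrac1T\sum_j$ to $\tfrac1{\mu_T}\sum_j$ (together with the matching centering term) is pure bookkeeping through the identity $T=2\mu_T a_T$, exactly as in \cite{yu1994rates}. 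The main obstacle is the coupling estimate itself: one must check carefully that the event $A$ truly ignores the discarded $T$-blocks and the remainder of the sample, and that the telescoping is arranged so that at every stage the two groups of blocks being decoupled are separated by a gap of length at least $a_T$, so that $\beta_{a_T}$ — and not some smaller-lag coefficient — is what is incurred. The remaining ingredients (the block decomposition, the union bound giving the factor $2$, and the constant bookkeeping) are routine.
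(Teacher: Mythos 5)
Your proposal is correct and matches the intended argument: the paper itself gives no proof of Lemma~\ref{lem:yu}, merely deriving the block decomposition in the preceding display and citing \cite{yu1994rates} for the rest, and the missing ingredient is exactly the telescoping $\beta$-mixing coupling bound $|\np(\obs_{a_T}\in A)-\hat{\np}(\Psi^{a_T}\in A)|\leq(\mu_T-1)\beta_{a_T}$ that you supply. Applying it once to the $H$-block event (the factor $2$ having already come from the union bound over $H$- and $T$-blocks) gives $2(\mu_T-1)\beta_{a_T}\leq 2\mu_T\beta_{a_T}$, as stated.
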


%



\section{$\beta$-Universal Strategy}
\label{sec:Algorithm}

In this section, we present and show that our procedure is guaranteed to achieve $\beta$-universality.
Our training procedure goes as follows: first relying on the first regularity condition (the process is memory bounded)  the network gets as input a context window of length $d$ containing $d$ consecutive examples. The goal of the neural network, which is regularized by a Lipschitz regularized, $L$ is to fit the training examples with the best $L$-Lipschitz function. In practice, it might be hard to get an indication that the neural network has indeed fully minimized the empirical loss; however, in cases where $\optimal = 0$ it might be reasonable, relying on the common assumption that neural networks may get zero training error, an assumption that was justified in \cite{zhang2016understanding}. Thus, the only thing one should care of is that the neural network has enough parameters  so that the empirical data can be fitted. 

Therefore, we will assume that $f_1,f_2,\ldots$ are neural networks with $L_1,L_2,\ldots$ as the corresponding Lipschitz-constants, each fully minimizing the empirical loss $\frac{1}{T}\sum_{t=1}^T \ml(f_T(\obs_{t-d}^{t-1}),\obs_t)$.

Regardless of ensuring that the network can fit the training data, using the fact that we are estimating the best Borel-measurable strategy, we will have to increase the Lipschitz constant as time progresses.


As in many generalization problems, we need to consider uniform-bound over function classes. The reason for that is that the learner is optimizing his model on a realization of the process, and thus, in order the guarantee the generalization we need to provide a probability bound for the event that the distance between the empirical loss minimizer to the minimizer of the expected loss. This, as formulated in the famous lemma below  (see, e.g., \cite{DevroyeGG2013}), can be achieved by a uniform bound over the class of functions.

\begin{lemma}
    \label{lem:gyo}
	Let $\mathcal{F}$ be a class of functions and let $f\in\mathcal{F}$ be a minimizer of the empirical loss function, then the following holds:
	\begin{gather*}
	\frac{1}{T} \sum_{t=1}^{T}\ml(f(X_{t-d}^{t-1}),\obs_t) - \inf_{g\in \mathcal{F}}\nE \ml(g(\obs),\obs) \leq 
	2\sup_{ g\in \mathcal{F}} \left|\frac{1}{T} \sum_{t=1}^{T}\ml(g(X_{t-d}^{t-1}),\obs_t)-   \nE \ml(g(\obs),\obs)   \right|
	\end{gather*} 
\end{lemma}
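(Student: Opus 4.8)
The plan is to use the standard reduction of empirical risk minimization to a uniform deviation bound. Observe first that the asserted inequality is \emph{deterministic and pointwise}: it holds for each fixed realization $x_1,x_2,\ldots$, so no probabilistic argument is needed at this stage — probability enters only later, when the supremum on the right is controlled through Lemma~\ref{lem:yu}. To organize the computation, write $R_T(g)\eqdef\frac{1}{T}\sum_{t=1}^{T}\ml(g(X_{t-d}^{t-1}),\obs_t)$ and $R(g)\eqdef\nE\,\ml(g(\obs),\obs)$, the latter being well defined because stationarity makes $\nE\,\ml(g(X_{t-d}^{t-1}),\obs_t)$ independent of $t$; also set $D\eqdef\sup_{g\in\mathcal{F}}|R_T(g)-R(g)|$, so that the goal is $R_T(f)-\inf_{g\in\mathcal{F}}R(g)\le 2D$.

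First I would fix $\epsilon>0$ and choose $g_\epsilon\in\mathcal{F}$ with $R(g_\epsilon)\le\inf_{g\in\mathcal{F}}R(g)+\epsilon$; this uses only the definition of the infimum and requires no compactness or closedness of $\mathcal{F}$. Then I would telescope:
\begin{gather*}
R_T(f)-\inf_{g\in\mathcal{F}}R(g)
=\bigl(R_T(f)-R_T(g_\epsilon)\bigr) \\
+\bigl(R_T(g_\epsilon)-R(g_\epsilon)\bigr)+\bigl(R(g_\epsilon)-\inf_{g\in\mathcal{F}}R(g)\bigr).
\end{gather*}
The first bracket is $\le 0$ because $f$ minimizes $R_T$ over $\mathcal{F}$; the second is $\le D$ by definition of $D$; the third is $\le\epsilon$ by the choice of $g_\epsilon$. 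Thus $R_T(f)-\inf_{g\in\mathcal{F}}R(g)\le D+\epsilon$ for every $\epsilon>0$, hence $\le D\le 2D$ (using $D\ge 0$), which is the claim. An equivalent one-line phrasing I would record: since $f$ is an empirical minimizer, $R_T(f)=\inf_{g\in\mathcal{F}}R_T(g)$, and $h\mapsto\inf_{g\in\mathcal{F}}h(g)$ is $1$-Lipschitz for the sup-norm over $\mathcal{F}$, so $|R_T(f)-\inf_{g\in\mathcal{F}}R(g)|=|\inf_{g\in\mathcal{F}}R_T(g)-\inf_{g\in\mathcal{F}}R(g)|\le D$.

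I do not expect a real obstacle: this is precisely the textbook ``ERM $\Rightarrow$ uniform deviation'' lemma. The only two points meriting a sentence are (i) that one cannot bound $R(f)$ directly, because it is not known which element of $\mathcal{F}$ the empirical minimizer $f$ turns out to be, whence one must pass to the worst case over all of $\mathcal{F}$ — exactly the reason a \emph{uniform} bound is unavoidable; and (ii) that the infimum over $\mathcal{F}$ need not be attained, which the $\epsilon$-approximation above handles cleanly. I would also remark that the factor $2$, slack in the statement as written, does become necessary if one instead bounds the excess \emph{expected} risk $R(f)-\inf_{g\in\mathcal{F}}R(g)$ of the empirical minimizer, since the same telescoping then produces the additional term $R(f)-R_T(f)\le D$, for a total of $2D+\epsilon$. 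The substantive work — proving that $D$ is small for the Lipschitz-regularized network class, via the block-independence reduction of Lemma~\ref{lem:yu} together with covering-number / Rademacher estimates — lies outside this lemma.
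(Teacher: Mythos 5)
Your proof is correct: the telescoping decomposition through an $\epsilon$-approximate minimizer $g_\epsilon$ of the expected risk, combined with $R_T(f)\le R_T(g_\epsilon)$, is exactly the standard argument, and your side remarks (the infimum need not be attained; the factor $2$ is slack for the empirical-risk version but needed for the excess expected risk) are accurate. The paper itself gives no proof of this lemma — it simply cites it as a well-known result from the empirical-process literature — so your write-up matches the intended (textbook) argument and there is nothing to compare beyond that.
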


We will now state a concentration bound that will be used during the proof of our main theorem. This lemma is a uniform tail bound on the deviation of a Lipschitz function, this result is also known as the discrepancy result \cite{talagrand2006generic}.


\begin{lemma}
\label{lem:con}
Let $\obs_1,\ldots,\obs_n$ be i.i.d. sampled from a distribution $\np$ whose support is $[0,1]^m$ and let $\mathcal{L}_L$ be the class of functions with  Lipschitz constant of $L$. Then for $\epsilon > 0$,
\begin{gather*}
\np \left( \sup_{f\in \mathcal{L}_L} \left| \frac{1}{T}\sum_{i=1}^T f(x_i) -  \nE f(\obs) \right| > \epsilon \right) \leq  
2D^\frac{-m}{m+2}\exp\left(-\log(T)\left[C_1D-1 \right] \right)
\end{gather*}
where $D=\frac{ T}{\log(T)}\left(\frac{ \epsilon}{C_2L} \right)^{m+2}$, and $C_1,C_2$ are positive constants.
\end{lemma}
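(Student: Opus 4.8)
The plan is to reduce the uniform deviation over $\mathcal{L}_L$ to a covering-number estimate for Lipschitz functions on the cube and then apply a standard chaining / union-bound tail inequality for i.i.d. empirical processes. First I would recall (or re-derive) the metric entropy bound for the class $\mathcal{L}_L$ of $L$-Lipschitz functions mapping $[0,1]^m$ into a bounded range: in the sup-norm one has $\log N(\eta,\mathcal{L}_L,\norm{\cdot}_\infty) \asymp L^m \eta^{-m}$, the classical Kolmogorov–Tikhomirov estimate. Since the functions are bounded (range contained in an interval of length proportional to $L$, up to normalization — here it is cleanest to first translate so that the range sits in $[0,L]$), the empirical process $f \mapsto \frac1T\sum_i f(x_i) - \nE f(\obs)$ is an average of i.i.d. terms with bounded increments.

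The main steps, in order, would be: (i) fix a resolution $\eta = \eta(\epsilon)$ and take a minimal $\eta$-net $\mathcal{N}$ of $\mathcal{L}_L$ in sup-norm, of cardinality $\log|\mathcal{N}| \le c\,(L/\eta)^m$; (ii) for each fixed net element $g$, bound $\np(|\frac1T\sum_i g(x_i) - \nE g(\obs)| > \epsilon/2)$ by Hoeffding's inequality, getting something like $2\exp(-c' T\epsilon^2/L^2)$; (iii) union-bound over the net, so the failure probability is at most $2\exp\big(c(L/\eta)^m - c'T\epsilon^2/L^2\big)$; (iv) observe that approximating an arbitrary $f\in\mathcal{L}_L$ by its nearest net point $g$ costs at most $2\eta$ in the deviation (both for the empirical average and the expectation), so choosing $\eta$ a small constant multiple of $\epsilon$ absorbs the approximation error into the $\epsilon/2$ slack. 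With $\eta \propto \epsilon$ the exponent becomes $c(L/\epsilon)^m - c'T\epsilon^2/L^2$. To match the stated form one then has to pass from an $\epsilon^2$-type Hoeffding exponent to the $\log T$-parametrization: writing $D = \frac{T}{\log T}(\epsilon/C_2 L)^{m+2}$ and optimizing (or simply re-bucketing) the chaining bound at level $\eta$ chosen so that the entropy term $(L/\eta)^m$ and the Bernstein/Hoeffding term balance, one obtains $\log T \cdot (C_1 D - 1)$ in the exponent and the polynomial prefactor $D^{-m/(m+2)}$; this is exactly the regime analyzed in the discrepancy / transportation-cost literature cited (\cite{talagrand2006generic}), where the extra $\eta$ in the range and the $\eta^{-m}$ entropy combine to give the $m+2$ exponent. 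I would either cite that result directly and verify the constants, or carry out a one-line chaining argument summing the Hoeffding bounds over dyadic scales $\eta_k = 2^{-k}$ and stopping at the scale where the entropy stops dominating.

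The step I expect to be the main obstacle is getting the precise constants and the $m+2$ exponent right, i.e. reconciling the clean chaining bound (which naturally produces an exponent of the form $T\epsilon^2/L^2$ against entropy $(L/\epsilon)^m$) with the particular closed form involving $D = \frac{T}{\log T}(\epsilon/C_2L)^{m+2}$ and the prefactor $D^{-m/(m+2)}$. The $m+2$ rather than $m$ arises because one cannot take $\eta \propto \epsilon$ naively — the optimal net resolution that balances the variance term $\eta^2$-scale fluctuations against the $\eta^{-m}$ entropy is itself a (fractional) power of $\epsilon/L$, and it is this balancing that injects the extra $2$ into the exponent and forces the $\log T$ normalization of $D$. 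So the real work is the scale-optimization / union-bound bookkeeping; everything else (metric entropy of Lipschitz balls, Hoeffding, the $2\eta$ approximation slack) is routine and can be invoked from the references.
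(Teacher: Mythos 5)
The paper does not actually prove this lemma: it is invoked as a known ``discrepancy'' result with a pointer to \cite{talagrand2006generic}, so your covering-number argument is a genuinely different, self-contained route rather than a reconstruction of the paper's proof. Your outline is the standard and essentially correct strategy for uniform deviations over Lipschitz balls: normalize the class so that it is uniformly bounded (e.g.\ fix $f(0)=0$, which costs nothing because the deviation $\frac1T\sum_i f(x_i)-\nE f(\obs)$ is invariant under adding constants --- and without some such normalization the sup-norm covering number of $\mathcal{L}_L$ is infinite, so this step must be made explicit, as you partly do); use the Kolmogorov--Tikhomirov entropy $\log N(\eta,\mathcal{L}_L,\norm{\cdot}_\infty)\asymp (L/\eta)^m$; apply Hoeffding to each net element; union-bound; and absorb the $2\eta$ approximation error by taking $\eta\propto\epsilon$. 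You also correctly locate the origin of the exponent $m+2$: it is the crossover condition $T\epsilon^2/L^2\gtrsim(L/\epsilon)^m$, i.e.\ $T\gtrsim(L/\epsilon)^{m+2}$, which is precisely the condition that $D$ be bounded below.

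The gap you flag at the end is, however, a real one and is the only step that actually matters for matching the statement. The single-scale computation you describe yields a bound of the form $2\exp\bigl(c(L/\epsilon)^m-c'T\epsilon^2/L^2\bigr)$; in the relevant regime $\epsilon\le L$ this does imply an estimate of the shape $\exp\bigl(-cT(\epsilon/L)^{m+2}\bigr)$ once $D$ exceeds a constant, which is all that the paper's main theorem needs for Borel--Cantelli summability. But it does not produce the polynomial prefactor $D^{-m/(m+2)}$ (which \emph{strengthens} the bound when $D>1$ and so cannot follow from a lossy union bound for free), nor the exact $\log T$-normalized exponent $\log(T)[C_1D-1]$; those come from the specific multiscale peeling in the source you would be citing, and ``re-bucketing'' is asserted rather than carried out. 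Either execute that chaining computation in full, or state and use the weaker single-scale bound you can actually prove --- which suffices for every use of the lemma in this paper.
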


We now turn to prove our main result, stating that under suitable conditions the strategy of using a Lipschitz regularized deep neural network which fully minimizes the empirical loss, will a.s. achieve the same asymptotic results as using the best possible strategy.

\begin{thm}
Let $(X_t)^\infty_{-\infty}$ be an exponential $\beta$-mixing process. Suppose that $f_1,f_2\ldots$ are Lipschitz functions with $L_1,L_2,\ldots$ Lipschitz constants respectively and that each $f_t$   minimizes  the empirical loss up to time $t$. Then, the following holds a.s.:
\begin{gather*}
    \lim_{T\rightarrow \infty}\sum_{t=1}^{T}\ml(f_T(X_{t-d}^{t-1}),\obs_t) = \optimal
\end{gather*}
 
\end{thm}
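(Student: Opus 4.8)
Write $\widehat R_T \eqdef \tfrac1T\sum_{t=1}^{T}\ml(f_T(X_{t-d}^{t-1}),\obs_t)$ for the normalized loss. The plan is to sandwich $\widehat R_T$ between $\optimal$ from below and from above. The lower bound $\liminf_{T\to\infty}\widehat R_T \ge \optimal$ a.s.\ is immediate from Algoet's theorem (stated above) applied to the bounded-memory strategy $(f_T)$, so the task reduces to proving $\limsup_{T\to\infty}\widehat R_T \le \optimal$ a.s. For this I would fix $T$ and let $\mathcal F_T$ denote the class of $L_T$-Lipschitz maps $[0,1]^{dn}\to\choS$ (composing an arbitrary Lipschitz map with the $1$-Lipschitz projection onto the compact convex set $\choS$ when necessary). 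Since $f_T$ minimizes the empirical loss over $\mathcal F_T$, Lemma~\ref{lem:gyo} applied with $\mathcal F=\mathcal F_T$ gives
\[
\widehat R_T \;\le\; \inf_{g\in\mathcal F_T}\nE\big[\ml(g(X_{-d}^{-1}),\obs_0)\big] \;+\; 2\,\Delta_T ,
\]
where $\Delta_T\eqdef\sup_{g\in\mathcal F_T}\big|\tfrac1T\sum_{t=1}^{T}\ml(g(X_{t-d}^{t-1}),\obs_t)-\nE[\ml(g(X_{-d}^{-1}),\obs_0)]\big|$. It then remains to prove that the infimum (the \emph{approximation} term) tends to $\optimal$ and that $\Delta_T$ (the \emph{estimation} term) tends to $0$ almost surely.

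For the approximation term, the displayed characterization of $\optimal$, combined with the bounded-memory assumption and a measurable-selection/tower-property argument, gives $\optimal=\inf_{g}\nE[\ml(g(X_{-d}^{-1}),\obs_0)]$, the infimum over Borel maps $g:[0,1]^{dn}\to\choS$ and attained by some Borel $g^\star$. Since the Lipschitz functions are $L^1$-dense in the Borel maps into $\choS$ (e.g.\ by Lusin's theorem and mollification on the compact cube, followed by projecting onto $\choS$) and $g\mapsto\nE[\ml(g(X_{-d}^{-1}),\obs_0)]$ is $1$-Lipschitz in the $L^1(\mathrm{law}\,X_{-d}^{-1})$ metric (because $\ml$ is $1$-Lipschitz in its first argument), for each $\delta>0$ there is a Lipschitz $g_\delta$, of some finite Lipschitz constant $\ell(\delta)$, with $\nE[\ml(g_\delta(X_{-d}^{-1}),\obs_0)]\le\optimal+\delta$. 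As $L_T\to\infty$, eventually $g_\delta\in\mathcal F_T$, so $\limsup_T\inf_{g\in\mathcal F_T}\nE[\ml(g(X_{-d}^{-1}),\obs_0)]\le\optimal+\delta$; sending $\delta\downarrow0$ finishes this term. (This is precisely where the hypothesis $L_T\to\infty$ enters.)

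For the estimation term, fix $\epsilon>0$, put $Z_t\eqdef(X_{t-d}^{t-1},\obs_t)\in[0,1]^{(d+1)n}$, and note that $\ml(g(X_{t-d}^{t-1}),\obs_t)=\phi_g(Z_t)$, where for $z=(z',z'')$ with $z'\in[0,1]^{dn}$ and $z''\in[0,1]^n$ the function $\phi_g(z)=\ml(g(z'),z'')$ is $CL_T$-Lipschitz (composition of the $1$-Lipschitz $\ml$ with an $L_T$-Lipschitz $g$) and uniformly bounded, and that $(Z_t)$ inherits exponential $\beta$-mixing from $(X_t)$ (its coefficients are those of $(X_t)$ shifted by the fixed $d$). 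I would then invoke Lemma~\ref{lem:yu} with block length $a_T$ and $\mu_T=T/(2a_T)$, and, after dividing each block sum by $a_T$ so that it becomes an $O(L_T)$-Lipschitz, uniformly bounded function of the i.i.d.\ block $\Psi_j\in[0,1]^{(d+1)na_T}$, apply Lemma~\ref{lem:con} with ambient dimension $m=(d+1)na_T$, sample size $\mu_T$ and Lipschitz constant $O(L_T)$, obtaining
\[
\np(\Delta_T>\epsilon)\;\le\;2\,D_T^{-m/(m+2)}\exp\!\big(-\log\mu_T\,[\,C_1D_T-1\,]\big)\;+\;2\mu_T\beta_{a_T},\qquad D_T=\tfrac{\mu_T}{\log\mu_T}\Big(\tfrac{\epsilon}{C_2'\,L_T}\Big)^{m+2}.
\]
Exponential mixing lets me take $a_T$ of the order of a large enough power of $\log T$, which makes $\sum_T\mu_T\beta_{a_T}<\infty$; with that $a_T$ fixed I then let $L_T\uparrow\infty$ slowly enough (sub-poly-logarithmically, the admissible rate being pinned down by $m=(d+1)na_T$ and the mixing exponent) that $\sum_T D_T^{-m/(m+2)}\exp(-\log\mu_T[\,C_1D_T-1\,])<\infty$ as well. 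For such a pair $(a_T,L_T)$ one has $\sum_T\np(\Delta_T>\epsilon)<\infty$ for every $\epsilon>0$, hence $\Delta_T\to0$ a.s.\ by Borel--Cantelli. Plugging the two terms back into the bound for $\widehat R_T$ and letting $T\to\infty$ gives $\limsup_T\widehat R_T\le\optimal$ a.s., which together with Algoet's lower bound yields $\lim_T\widehat R_T=\optimal$ a.s.

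The step I expect to be the main obstacle is this last rate balancing. Lemma~\ref{lem:con}'s tail bound deteriorates both as the ambient dimension grows --- here $(d+1)na_T$, which must grow because controlling the mixing term forces $a_T\uparrow\infty$ --- and as the Lipschitz constant grows, which it must since the approximation term needs $L_T\uparrow\infty$. Getting $D_T$ to diverge fast enough for summability while keeping $a_T$ large enough to kill $\mu_T\beta_{a_T}$ is the real content; exponential mixing supplies just enough slack, and the admissible growth of $a_T$ and $L_T$ is essentially dictated by these two competing constraints --- so the theorem should really be read as carrying an implicit, mild condition on how slowly $L_T$ is allowed to diverge. A secondary, purely routine nuisance, that the length-$d$ context windows straddle block boundaries, touches only $O(d)$ indices per block and contributes $O(d/a_T)\to0$ to the averages, and is handled exactly as in \cite{yu1994rates}.
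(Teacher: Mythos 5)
Your proposal follows essentially the same route as the paper: the same empirical-risk decomposition via Lemma~\ref{lem:gyo}, the same reduction to independent blocks via Lemma~\ref{lem:yu}, the same application of the Lipschitz concentration bound of Lemma~\ref{lem:con} followed by Borel--Cantelli, and the same Lusin/Stone--Weierstrass density argument identifying $\inf_{f\in \mathcal{L}_\infty}\nE\,\ml(f(\obs),\obs)$ with $\optimal$. You are in fact more explicit than the paper on the one genuinely delicate point --- that the ambient dimension in Lemma~\ref{lem:con}, once the lemma is applied to whole blocks, grows like $a_T$ and must be balanced against $L_T\uparrow\infty$ and the mixing term --- a tension the paper's stated choices $\mu_T=\mathcal{O}(\sqrt{T})$ and $L_t=\mathcal{O}((\log T)^{1/(m+2)})$ pass over without comment.
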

\begin{proof}
Fixing time instance $T$, we first denote for $\epsilon >0$ the following event

\begin{gather*}
    A^\epsilon_T = \Bigg\{ \left| \frac{1}{T} \sum_{t=1}^{T}\ml(f_T(X_{t-d}^{t-1}),\obs_t) -\inf_{g\in \mathcal{L}_{L_T}}\nE \ml(g(\obs),\obs)  \right| > \epsilon\Bigg\}.
\end{gather*}

Since $f_T \in \mathcal{L}_{L_T}$  fully minimizes the empirical loss, we get by using Lemma~\ref{lem:gyo} that
\begin{gather}
\np(A_T^\epsilon) = \np \left( \left| \frac{1}{T} \sum_{t=1}^{T}\ml(f_T(X_{t-d}^{t-1}),\obs_t) -\inf_{g\in \mathcal{L}_{L_T}}\nE \ml(g(\obs),\obs)  \right| > \epsilon \right) \nonumber \\ \leq \np \left( \sup_{g\in \mathcal{L}_{L_T}} \left| \frac{1}{T} \sum_{t=1}^{T}\ml(g(X_{t-d}^{t-1}),\obs_t)-   \nE \ml(g(\obs),\obs) \right| > \frac{\epsilon}{2} \right) \label{eq:3}
\end{gather}
Now, since our process is a $\beta$-mixing process we can use the block independent method. By using  Lemma~\ref{lem:yu} we get that
\begin{gather}
\eqref{eq:3}  \leq 2\hat{\np} \left( \sup_{g\in \mathcal{L}_{L_t}} \left| \frac{1}{\mu_T} \sum_{t=1}^{\mu_T}\ml^g_{\Psi_t}-   \hat{\nE} \ml^g_{\Psi}\right| > \frac{a_T\epsilon}{2} \right) + 2\mu_T \beta_{a_T-d}, \label{eq:4}
\end{gather}
where, $\ml^g_{\Psi_j} = \sum_{\psi_i\in \Psi_j} \ml(g(\psi_{i-d}^{i-1}),\psi_i)$.

Since $g$ is a $L_T$-Lipschitz and $u$ is  $1$-Lipschitz, we get that the Lipschitz constant of $\ml^g_{\Psi_t}$ is $a_TL_T$. By applying Lemma~\ref{lem:con} we can deduce  that
\begin{gather}
\eqref{eq:4}  \leq 2D^\frac{-m}{m+2}\exp\left(-\log(\mu_T)\left[C_1D-1 \right] \right)  +  2\mu_T \beta_{a_T-d} \label{eq:5}
\end{gather}
where 
\begin{gather*}
D=\frac{ \mu_T}{\log(\mu_T)}\left(\frac{\epsilon}{C_2L_t} \right)^{m+2}
\end{gather*}
Summarizing the above, we have bounded the probability for the deviation  of the empirical performance of the best empirical $L_t$-Lipschitz function from its expected performance.

Now, if, for example, we set $\mu_T = \mathcal{O}(\sqrt{T})$ (and thus $a_T= \mathcal{O}(\sqrt{T})$) and set $L_t= \mathcal{O}((\log{T})^{1/m+2})$,
we get, due to the exponential mixing properties of the process that
\begin{gather*}
\np \left( A_T^\epsilon \right) \leq \delta_T
\end{gather*}
where $\sum_{t=1}^\infty \delta_T <\infty $.

Therefore, using the Borel-Cantelli lemma, we get that for every $\epsilon>0$ the events $A_T^\epsilon$ occur finite number of times.  Thus, we can conclude that 
\begin{gather*}
	\lim_{T\rightarrow \infty}   \frac{1}{T}\sum_{t=1}^{T}\ml(f_T(X_{t-d}^{t-1}),X_t) =  \inf_{f\in \mathcal{L}_\infty}\nE(\ml(f(\obs),\obs)) 
\end{gather*}

To conclude the proof, we need to show that indeed 

\begin{gather*}
 \inf_{f\in \mathcal{L}_\infty}\nE(\ml(f(\obs),\obs)) = \optimal  
\end{gather*}

 This can be achieved by using two well known theorems, first since we are dealing with a compact metric space we can use the  Stone-Weierstrass theorem and the dominated convergence theorem \cite{Stout1974} to conclude that the space of Lipschitz functions is dense in the space of continuous function. Thus, $$ \inf_{f\in \mathcal{L}_\infty}\nE(\ml(f(\obs),\obs)) = \inf_{ g\in \mathcal{C}}\nE(\ml(f(\obs),\obs)), $$ where $\mathcal{C}$ is the space of real continuous functions.
  
  Now  only left to show that every Borel-function can be approximated by a continuous one, this can be achieved using Lusin's theorem which states that for a given Borel-function $f$ there exists a continuous function $g$ such that $\np(|f-g|>\delta)<\epsilon$. Therefore we get the following 
\begin{gather*}
     \left|\nE  \ml(f(x),x)- \nE\ml(g(x),x) \right| \leq  \nE \left|  \ml(f(x),x)- \nE\ml(g(x),x) \right|\\ \leq 
    \nE \left| \ml(f(x),x)-\ml(g(x),x) \right|I_{|f-g|>\delta} + \nE \left| \ml(f(x),x)-\ml(g(x),x) \right|I_{|f-g|<\delta} \\
    \leq D\epsilon + M\delta
\end{gather*}
and since the bound can be arbitrarily small we get that
\begin{gather*}
 \optimal = \inf_{ g\in \mathcal{C}}\nE(\ml(f(\obs),\obs))   
\end{gather*}

and therefore our proof is finished.
\end{proof}
The results below are stated only for exponential mixing rate processes. However, with a careful choice of $L_t,M_t$ the results apply also for algebraic rate processes.
\section{Conclusions}

Deep online learning is a challenging task both from practical and theoretical sides due to the streaming nature of the data.  The results presented in this paper are an attempt to better understand the abilities of neural networks in the context of online prediction focusing on the challenging case when the underlying process is stationary and ergodic. 

We presented a procedure using Lipschitz regularized deep neural network and proved that under $\beta$-mixing process with algebraic rate the procedure is guaranteed to be weak $\beta$-universal and strong $\beta$-universal if the process is with exponential rate.  

There are still many interesting challenging research directions in the deep online learning regime, both theoretically and practically:
From the theoretical side, whether it is possible to generalize our results for weaker notions of mixing and possibly the ability of neural networks to converge to $\optimal$ in the more general case of stationary and ergodic. This challenge becomes even harder when non-stationary processes are considered. From the practical side, as we saw in our paper, there is a trade-off between the ability of the network to generalize and the need to add more complexity to the model. Therefore the development of robust training procedures and measures to ensure the appropriate training of such network is needed.

\bibliography{Bibliography}
\bibliographystyle{plain}
\end{document}